\newtheorem{thm}{Theorem}[section]
\newtheorem{lem}[thm]{Lemma}
\newcommand{\NN}{\mathcal{N}}
\newcommand{\R}{\mathbb{R}}
\newcommand{\st}{\,\,:\,\,}
\newcommand{\Section}[1]{\section{#1}\setcounter{equation}{0}}
\renewcommand{\vec}[1]{\boldsymbol{#1}}
\newcommand{\vb}{{\vec{b}}}
\newcommand{\va}{{\vec{a}}}
\newcommand{\vx}{\vec{x}}
\newcommand{\vy}{\vec{y}}
\newcommand{\openbox}{\leavevmode
  \hbox to.77778em{%
    \hfil\vrule
  \vbox to.675em{\hrule width.6em\vfil\hrule}%
  \vrule\hfil}}
\newcommand{\proofname}{Proof}
\newenvironment{proof}[1][\proofname]{\par\normalfont
  \trivlist\item[\hskip\labelsep\itshape #1:]\ignorespaces
  }{\hspace*{1cm}\hspace*{\fill}\openbox \medskip\endtrivlist}
\title{An elementary proof of a universal approximation theorem
  %\thanks{} 
}
\date{}
\author{Chris Monico \\
  {\small Department of Mathematics and Statistics\vspace{-2mm}}\\
  {\small Texas Tech University\vspace{-2mm}}\\
  {\small {\em e-mail:\/} c.monico@ttu.edu }
  }
\begin{document}\maketitle
\thispagestyle{empty}
\begin{abstract}
  In this short note, we give an elementary proof of a universal
approximation theorem for neural networks with three hidden
layers and increasing, continuous, bounded activation function.
The result is weaker than the best known results, but the proof
is elementary in the sense that no machinery beyond undergraduate analysis
is used.
\end{abstract}

%%%%%%%%%%%%%%%%%%%%%%%%%%%%%%%%%%%%%%%%%%%%%%%%%%%%%%%%%%%%%%
\Section{Introduction}  \label{sec:intro}

  There are several versions of {\em universal approximation theorems}
known, including the very well-known ones from \cite{cyb89, hor89, les93}.
 Each of them states that some collection of neural networks
is dense in some space of continuous functions with respect to the
uniform norm. In this short note, we present what we believe to be a new
and atypically elementary proof of one such theorem. 
If $\sigma$ is a 0-1 squashing function (a.k.a. a {\em sigmoidal function}), we show that the collection
of neural networks with three hidden layers and activation function $\sigma$
(except at the output) is dense in the space $C(K)$ of real-valued continuous
functions on a compact set $K\subset\R^n$.
The result given here is weaker than the best known results,
but the argument relies only on basic results about compact sets and continuous functions
which are generally covered in an undergraduate analysis course;
it is really nothing more than an exercise in ``epsilon chasing,''
though the underlying intuitive motivation is fairly natural.

%%%%%%%%%%%%%%%%%%%%%%%%%%%%%%%%%%%%%%%%%%%%%%%%%%%%%%%%%%%%%%
\section{Notations}

  For the entirety of this note, we assume the following:
  \begin{enumerate}
    \item $\sigma:\R\longrightarrow\R$ is increasing, continuous, and
          $\displaystyle{\lim_{x\to -\infty}\sigma(x)=0}$ and 
          $\displaystyle{\lim_{x\to\infty}\sigma(x)=1}$.
    \item $K$ is a compact subset of $\R^n$.
  \end{enumerate}
  For example, the sigmoid function $\sigma(x)=1/(1+e^{-x})$ satisfies the first
  assumption. A function satisfying that assumption will be called a 0-1 {\em squashing function}.
  There's nothing particularly special about 0 and 1 here. The crucial
  properties used in our arguments are really just that $\sigma$ is
  increasing, continuous, bounded, and non-constant. 
  But the discussion is simplified
  with these additional limiting assumptions.

  The compact set $K$ will represent the domain of a function to be
  approximated. In particular, the unit cube $K=[0,1]^n$ is a special case.

  Of course, it's not at all necessary for a neural network to use the same
  activation function in each hidden layer, but that will suffice for our
  purposes. Fix a positive integer $n$ and let 
  \begin{eqnarray*}
    \NN_1 &=& \{ f\in C(K) \st f(x_1,\ldots,x_n) = a_0+a_1x_1 + \cdots + a_nx_n, \\
          &&       \hspace{12pt}\mbox{ for some }a_0,\ldots,a_n\in\R\},\\
    \NN_1^\sigma &=& \{ F\in C(K) \st F = \sigma\circ f, \hspace{12pt}\mbox{ for some } f\in\NN_1\}.
  \end{eqnarray*}
  That is, $\NN_1$ is the set of all affine functions of $x_1,\ldots, x_n$.
  And $\NN_1^\sigma$ is the set of all possible node output functions in Layer 1.
  For $k\ge 1$, we define
  \begin{eqnarray*}
    \NN_{k+1} &=& \{ g\in C(K) \st g = a_0+a_1F_1 + \cdots + a_mF_m, \\
              &&    \hspace{12pt}\mbox{ for some } F_1,\ldots,F_m\in\NN_k^\sigma, \,\,   a_0,\ldots,a_m\in\R\},\\
    \NN_{k+1}^\sigma &=& \{ G\in C(K) \st G = \sigma\circ g, \hspace{12pt}\mbox{ for some } g\in\NN_{k+1}\}.
  \end{eqnarray*}
  Thus, $\NN_{k+1}^\sigma$ is the set of all possible node output functions
  in Layer $k+1$.
  It follows easily from the definitions above that $\NN_k^\sigma \subset \NN_{k+1}$,
  and if $g_1, g_2\in\NN_{k}$ then $(a_0 + a_1g_1 + a_2g_2)\in\NN_{k}$ for all
  $a_0,a_1,a_2\in\R$. We will use these facts frequently in the sequel.

%%%%%%%%%%%%%%%%%%%%%%%%%%%%%%%%%%%%%%%%%%%%%%%%%%%%%%%%%%%%%%
\section{Three separation lemmas}

  Roughly speaking, the strategy of the proof is to first show that 
  $\sigma$ separates points in $\R^n$ in a strong sense.
  We then show that $\NN_2$
  separates points from closed sets in a similarly strong sense, and that
  $\NN_3$ separates closed sets from closed sets again in a strong sense. 
  Finally, the theorem will be proven using this result
  about $\NN_3$.

%%%%%%%%%%%%%%%%%%%%%%%%%%%%%%%%%%%%%%%%%%%%%%%%%%%%%%%%%%%%%%%%%%%%%%%%%%%%%%%%%
\begin{lem} \label{lem:ptpt}
  Let $x_0$ and $x_1$ be distinct real numbers.
  For each $\epsilon>0$ there exist $s,t\in\R$ such that $\sigma(s+tx_0)<\epsilon$
  and $\sigma(s+tx_1)>1-\epsilon$. If, in addition, $x_0<x_1$ and $\epsilon<1/2$, 
  then $\sigma(s+tx)<\epsilon$ on the interval $(-\infty, x_0]$ and 
  $\sigma(s+tx)>1-\epsilon$ on the interval $[x_1,\infty)$.
\end{lem}
\begin{proof}
  Without loss of generality, we may assume $\epsilon<1$.
  By definition of a 0-1 squashing function, there exist $y_0,y_1\in\R$ such
  that $\sigma(y_0)=\epsilon/2$ and $\sigma(y_1)=1-\epsilon/2$. 
  Since $x_1 - x_0 \ne 0$, the linear system
  \[
    \spalignmat{1,x_0; 1,x_1}\spalignmat{s;t}=\spalignmat{y_0;y_1},
  \]
  has a solution, say $(s_0, t_0)^T$. Thus,
  \begin{eqnarray*}
    \sigma(s_0+t_0x_0) &=& \sigma(y_0) = \epsilon/2 < \epsilon, \hspace{12pt}\mbox{ and }\\
    \sigma(s_0+t_0x_1) &=& \sigma(y_1) = 1-\epsilon/2 > 1-\epsilon.
  \end{eqnarray*}

  Suppose, in addition, that $x_0<x_1$ and $\epsilon<1/2$. 
  Since $\sigma(s+tx)$ is monotone and $\sigma(s+tx_0)<\epsilon<1/2<1-\epsilon<\sigma(s+tx_1)$,
  it follows that $\sigma(s+tx)$ is increasing, which proves the remainder of the lemma.
\end{proof}
  
The next lemma asserts that we can separate points from closed sets using
functions in $\NN_2$ in a strong sense. That is, one hidden layer suffices to 
separate points from closed sets in the described sense.

%%%%%%%%%%%%%%%%%%%%%%%%%%%%%%%%%%%%%%%%%%%%%%%%%%%%%%%%%%%%%%%%%%%%%%%%%%%%%%%%%
\begin{lem} \label{lem:ptset}
  Let $B\subset K$ be a closed set, and $\vx_0\in K-B$.
  For each $\epsilon>0$ there exists $g\in\NN_2$ such that
  $g> 1-\epsilon$ on $B$ and $g(\vx_0)<\epsilon$.
\end{lem}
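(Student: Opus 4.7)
The plan is to cover $B$ by finitely many open sets on each of which a single Layer-$1$ node output, built via Lemma~\ref{lem:ptpt}, is forced close to $1$ while remaining close to $0$ at $\vx_0$. The natural candidate for $g$ is then the plain sum of these $m$ node outputs: at each $\vx \in B$ some summand will be near $1$ and the rest will be nonnegative (any $0$-$1$ squashing function is automatically nonnegative by monotonicity and its left limit), while at $\vx_0$ every summand is near $0$ and there are only $m$ of them. The only subtlety is that the sharpness required of each summand depends on $m$, which could itself depend on how sharply the cover is cut---a potential circularity that I head off by fixing the geometry of the cover \emph{before} invoking Lemma~\ref{lem:ptpt}.

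Concretely, for each $\vy \in B$ I would introduce the affine functional
\[
\ell_\vy(\vx) \;=\; \frac{\langle \vx - \vx_0,\, \vy - \vx_0\rangle}{\|\vy - \vx_0\|^2},
\]
so that $\ell_\vy(\vx_0) = 0$ and $\ell_\vy(\vy) = 1$. The open set $V_\vy = \{\vx \in K : \ell_\vy(\vx) > 1/2\}$ contains $\vy$, so $\{V_\vy\}_{\vy \in B}$ is an open cover of the compact set $B$; I extract a finite subcover $V_{\vy_1}, \ldots, V_{\vy_m}$. At this point $m$ is pinned down, independent of any later approximation parameter.

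With $m$ in hand I would pick $\eta > 0$ small enough that $\eta < 1$ and $m\eta < \epsilon$, and for each $j$ apply Lemma~\ref{lem:ptpt} to the distinct real numbers $0$ and $1/2$ to produce $s_j, t_j \in \R$ with $\sigma(s_j) < \eta$ and $\sigma(s_j + t_j/2) > 1-\eta$. Monotonicity of $\sigma$ forces $t_j > 0$, so on all of $V_{\vy_j}$ the function $F_j := \sigma \circ (s_j + t_j \ell_{\vy_j}) \in \NN_1^\sigma$ exceeds $1 - \eta$, while $F_j(\vx_0) = \sigma(s_j) < \eta$. The candidate is then $g = F_1 + \cdots + F_m \in \NN_2$.

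Verification is routine: for $\vx \in B$, pick $j$ with $\vx \in V_{\vy_j}$, so $g(\vx) \ge F_j(\vx) > 1 - \eta > 1 - \epsilon$ (the last step using $\eta < m\eta < \epsilon$ when $m \ge 1$), while $g(\vx_0) = \sum_j \sigma(s_j) < m\eta < \epsilon$. The main obstacle is really the circularity described above; once the cover is built from the geometric functionals $\ell_\vy$ independently of $\eta$, the rest is direct bookkeeping with Lemma~\ref{lem:ptpt} and compactness.
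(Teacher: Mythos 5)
Your proof is correct and follows essentially the same route as the paper's: affine functionals separating $\vx_0$ from individual points of $B$, a finite subcover fixing the number of summands before the sigmoid is sharpened, and a sum of $\NN_1^\sigma$ terms bounded below by a single large summand (using $\sigma\ge 0$). The paper's version differs only cosmetically, normalizing the affine maps to take values near $0$ at $\vx_0$ and near $1$ at $\vb$ rather than exactly $0$ and $1$, and using one pair $(s,t)$ for all $N$ summands instead of one per index.
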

\begin{proof}
  Without loss of generality, assume $0<\epsilon<1/3$.
  Let $\vb\in B$. Since $\vb\ne\vx_0$, there exists $f_\vb\in\NN_1$ such that
  $f_\vb(\vx_0)<\epsilon/2$ and $f_\vb(\vb)>1-\epsilon/2$.
  Let 
  \[
    U_\vb = \{\vx\in K \st f_\vb(\vx)>1-\epsilon\}.
  \]
  Since $f_\vb$ is continuous, $U_\vb$ is open and $\vb\in U_\vb$,
  so $\{U_\vb\}_{\vb\in B}$ is an open cover of the compact set $B$.
  Let $\vb_1,\ldots,\vb_N\in B$ such that $\{U_{\vb_1},\ldots, U_{\vb_N}\}$
  is a cover of $B$.

  By Lemma \ref{lem:ptpt}, there exist $s, t\in\R$ such that $\sigma(s+tx)<\epsilon/N$
  on $(-\infty, \epsilon)$ and $\sigma(s+tx)>1-\epsilon$ on $(1-\epsilon, \infty)$.
  For each $1\le j\le N$, let $F_j=\sigma(s + tf_{\vb_j})$. Then $F_j\in\NN_1^\sigma$
  and $F_j(\vx_0)<\epsilon/N$ and $F_j\ge 1-\epsilon$ on $U_{\vb_j}$.

  Let $g = \sum_{j=1}^N F_j$, and it follows that $g\in\NN_2$,
  $g(\vx_0)<\epsilon$ and $g\ge 1-\epsilon$ on $B$. 
\end{proof}

The following lemma is the major tool in the proof of the universal approximation
theorem we will present. It asserts that two hidden layers suffice to separate
disjoint closed sets, in a sense very similar to the previous lemma.
Its proof is quite similar to the previous one.

%%%%%%%%%%%%%%%%%%%%%%%%%%%%%%%%%%%%%%%%%%%%%%%%%%%%%%%%%%%%%%%%%%%%%%%%%%%%%%%%%
\begin{lem} \label{lem:setset}
  Let $A$ and $B$ be disjoint closed subsets of $K$. Then for each $\epsilon>0$,
  \begin{enumerate}
    \item[(i)] there exists $h\in\NN_3$ such that $h<\epsilon$ on $B$ and $h>1-\epsilon$ on $A$,
    \item[(ii)] there exists $H\in\NN_3^\sigma$ such that $0\le H < \epsilon$ on $B$ and $1-\epsilon < H \le 1$ on $A$.
  \end{enumerate}
\end{lem}
\begin{proof}
  Again without loss of generality we assume $\epsilon\in(0,1/3)$.
  For each $\va\in A$, by Lemma \ref{lem:ptset}, there exists
  $\widetilde{g}_\va\in\NN_2$ such that $\widetilde{g}_\va > 1-\epsilon/2$ on $B$ and
  $\widetilde{g}_\va(\va) < \epsilon/2$.
  Let $g_\va = 1-\widetilde{g}_\va$, so that 
  $g_\va\in\NN_2$, and 
  $g_\va < \epsilon/2$ on $B$,
  and $g_\va(\va)>1-\epsilon/2$.
  Let
  \[
    U_\va=\{ \vx\in K \st g_\va(\vx) > 1-\epsilon\}.
  \]
  Since $g_\va$ is continuous, each $U_\va$ is open. And since
  $\va\in U_\va$, it follows that $\{U_\va\}_{\va\in A}$ is an
  open cover of the compact set $A$. 
  Let $\va_1,\ldots, \va_N\in A$ for which $\{U_{\va_1},\ldots, U_{\va_N}\}$
  is a cover of $A$.

  Let $s,t\in\R$ for which $\sigma(s+tx)<\epsilon/N$ on $(-\infty, \epsilon)$
  and $\sigma(s+tx)>1-\epsilon$ on $(1-\epsilon,\infty)$. Set
  \[
    h = \sum_{j=1}^N \sigma(s + tg_{\va_j}).
  \]
  First note that $h\in\NN_3$. If $\va\in A$, then $\va\in U_{\va_k}$ for some $k$,
  which implies that $g_{\va_k}(\va) > 1-\epsilon$ and so $h(\va)>1-\epsilon$.
  And if $\vb\in B$ it follows that $g_{\va_j}(\vb) < \epsilon/2$ for all $j$,
  which gives $\sigma(s + g_{\va_j}(\vb)) < \epsilon/N$ for all $j$ and 
  hence $h(\vb) < \epsilon$. This proves the first part.

  Since $\sigma$ is increasing, by Lemma \ref{lem:ptpt}
  there exist $s,t\in\R$ such that $\sigma(s+tx)<\epsilon$
  on the interval $(-\infty, \epsilon)$ and $\sigma(s+tx)>1-\epsilon$
  on the interval $(1-\epsilon, \infty)$. Then $H=\sigma(s+th)$ has the
  required properties.

\end{proof}

  Recall that if $K$ is a compact subset of $\R^n$ and
  $f:K\longrightarrow\R$ is continuous, then the
  {\em sup norm} of $f$, denoted $\|f\|_u$,
  is 
  \[
    \|f\|_u = \sup\{ |f(\vx)| \st \vx\in K\}.
  \]

%%%%%%%%%%%%%%%%%%%%%%%%%%%%%%%%%%%%%%%%%%%%%%%%%%%%%%%%%%%%%%
\begin{thm}[A Universal Approximation Theorem] \label{thm:UAT}
  Let $\sigma$ be a 0-1 squashing function, and $\NN_k,\NN_k^\sigma$
  as previously defined.
  Let $T:K\longrightarrow \R$ be a continuous function.
  For each $\epsilon>0$ there exists $f\in\NN_4$ such that
  $\|f-T\|_u<\epsilon$; that is, $\NN_4$ is dense in $C(K)$
  with respect to the $\sup$ norm.
\end{thm}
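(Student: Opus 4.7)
The plan is to approximate $T$ by a ``soft staircase'': a linear combination of approximate indicator functions of the super-level sets $\{T \ge c\}$, manufactured via Lemma \ref{lem:setset}. Since $T$ is continuous on the compact set $K$, $T(K) \subset [L,M]$ for some $L \le M$; we may assume $L < M$ (otherwise $T$ is constant, and constants lie in $\NN_4$). Fix a large integer $N$ (to be determined) and set $c_k = L + k(M-L)/N$ for $k = 0,\ldots,N$, together with $\eta = (M-L)/(2N)$. For each $k \in \{1,\ldots,N\}$, define the disjoint closed subsets of $K$:
$$A_k = \{\vx \in K \st T(\vx) \ge c_k\}, \qquad B_k = \{\vx \in K \st T(\vx) \le c_k - \eta\}.$$

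By Lemma \ref{lem:setset} (applied with tolerance $1/4$), for each $k$ there exists $h_k \in \NN_3$ with $h_k > 3/4$ on $A_k$ and $h_k < 1/4$ on $B_k$. Since $\NN_3$ is closed under scaling and under addition of constants, $\widetilde h_k := C(h_k - 1/2) \in \NN_3$ for any $C > 0$, with $\widetilde h_k > C/4$ on $A_k$ and $\widetilde h_k < -C/4$ on $B_k$. Using $\lim_{x \to \pm\infty}\sigma(x) \in \{0,1\}$, choose $C$ so large that $\sigma(\widetilde h_k) > 1 - \epsilon_1$ on $A_k$ and $\sigma(\widetilde h_k) < \epsilon_1$ on $B_k$, for a prescribed small $\epsilon_1 > 0$. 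The candidate
$$f = L + \frac{M-L}{N}\sum_{k=1}^{N}\sigma(\widetilde h_k)$$
is an affine combination of elements of $\NN_3^\sigma$, and hence $f \in \NN_4$.

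The error estimate hinges on the observation that, because $\eta = (M-L)/(2N)$, the ``transition regions'' $T^{-1}((c_k - \eta, c_k))$ are pairwise disjoint. Thus each $\vx \in K$ fails to lie in $A_k \cup B_k$ for \emph{at most one} index $k$; for every other index, $\sigma(\widetilde h_k(\vx))$ lies within $\epsilon_1$ of $\mathbf{1}[T(\vx) \ge c_k]$. Writing $q(\vx) = \#\{k : T(\vx) \ge c_k\}$, it follows that $\bigl|\sum_{k=1}^N \sigma(\widetilde h_k(\vx)) - q(\vx)\bigr| \le N\epsilon_1 + 1$, where the $+1$ bounds the single, possibly uncertain, transition term (which always lies in $(0,1)$). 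Hence $|f(\vx) - c_{q(\vx)}| \le (M-L)\epsilon_1 + (M-L)/N$, and combined with the quantization estimate $|T(\vx) - c_{q(\vx)}| < (M-L)/N$ this yields
$$|f(\vx) - T(\vx)| \le (M-L)\epsilon_1 + \frac{2(M-L)}{N}.$$

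Finally, choose $N$ so large that $2(M-L)/N < \epsilon/2$ and then $\epsilon_1$ so small (equivalently $C$ so large) that $(M-L)\epsilon_1 < \epsilon/2$; the resulting $f$ satisfies $\|f - T\|_u < \epsilon$. The main obstacle is not conceptual but bookkeeping: the single transition term contributes an unavoidable $O(1/N)$ error regardless of $C$, while the cumulative $N\epsilon_1$ error from the $N$ controlled terms must be kept under control by taking $\epsilon_1$ comparable to $1/N$ — which in turn forces $C$ to be chosen only \emph{after} $N$ is fixed.
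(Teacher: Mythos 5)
Your proof is correct, but it takes a genuinely different route from the paper's. Both arguments lean on Lemma \ref{lem:setset} as the decisive tool, but you use it constructively: quantize the range of $T$ into $N$ levels, manufacture an approximate indicator of each super-level set $\{\vx\in K \st T(\vx)\ge c_k\}$, and sum them into a soft staircase. The paper instead argues by contradiction: assuming $\inf_{f\in\NN_4}\|f-T\|_u=\alpha>0$, it takes a near-optimal $\widehat f$, applies Lemma \ref{lem:setset} once to the two closed sets where $\widehat f-T$ is large positive and large negative, and uses the resulting $H\in\NN_3^\sigma$ to form the strictly better approximant $f=\widehat f-\alpha H+\tfrac{\alpha}{2}$. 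Your version buys an explicit approximant, expressed directly in terms of the level sets of $T$, at the cost of the two-stage bookkeeping you acknowledge (fix $N$, then push $C$ large so that $\epsilon_1=O(1/N)$); the paper's version avoids all quantization bookkeeping and needs only a single application of the lemma, but is non-constructive. Your extra composition $\sigma(C(h_k-\tfrac12))$ is not mere convenience: the separator coming out of Lemma \ref{lem:setset} is only controlled on $A_k\cup B_k$ and can be large on the transition region, so re-squashing is exactly what caps the one uncontrolled term by $1$ while keeping each summand in $\NN_3^\sigma$ (hence $f\in\NN_4$). Two minor quibbles that do not affect correctness: since $\sigma$ is only assumed increasing, not strictly, the transition term lies in $[0,1]$ rather than $(0,1)$, which is all you actually use; and if some $A_k$ or $B_k$ happens to be empty the separation requirement is vacuous (or met by a constant), so nothing breaks.
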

\begin{proof}
  By way of contradiction, suppose that $T:K\longrightarrow \R$
  is a continuous function and
  \[
    \inf_{f\in\NN_4} \|f - T\|_u = \alpha > 0.
  \]
  Let $\widehat{f}\in\NN_4$ with $\alpha\le \|\widehat{f}-T\|_u < 4\alpha/3$.
  Define
  \begin{eqnarray*}
    U^+ &=& \left\{ \vx\in K \st \frac{\alpha}{3}\le (\widehat{f}-T)(\vx) \le \frac{4\alpha}{3} \right\}, \\
    U^- &=& \left\{ \vx\in K \st -\frac{4\alpha}{3}\le (\widehat{f}-T)(\vx) \le - \frac{\alpha}{3} \right\}.
  \end{eqnarray*}
  Since $U^+, U^-$ are closed and disjoint,
  by Lemma \ref{lem:setset} there exists $H\in\NN_3^\sigma$ such that
  \[ 
    0\le H < \frac{1}{6} \hspace{12pt}\mbox{ on } U^-, \hspace{12pt}\mbox{ and }
    \frac{5}{6} < H \le 1\hspace{12pt}\mbox{ on } U^+.
  \]
  Consider $f=\widehat{f}-\alpha H + \frac{\alpha}{2}\in\NN_4$.
  We claim that $\|f-T\|_u < \alpha$.

  Suppose first that $\vx\in U^+$. Then
  \begin{eqnarray*}
    (f-T)(\vx) &=& (\widehat{f}-T)(\vx) - \alpha H(\vx) + \frac{\alpha}{2} \\
               &<& \frac{4\alpha}{3} - \frac{5\alpha}{6} + \frac{\alpha}{2} 
               = \alpha.
  \end{eqnarray*}
  We also have that
  \begin{eqnarray*}
    (f-T)(\vx) &=& (\widehat{f}-T)(\vx) - \alpha H(\vx) + \frac{\alpha}{2} \\
               &\ge & \frac{\alpha}{3} - \alpha + \frac{\alpha}{2} 
               = -\frac{\alpha}{6} > -\alpha,
  \end{eqnarray*}
  And therefore $|f-T|<\alpha$ on $U^+$. Suppose now that $\vx\in U^-$.
  Then
  \begin{eqnarray*}
    (f-T)(\vx) &=& (\widehat{f}-T)(\vx) - \alpha H(\vx) + \frac{\alpha}{2} \\
               &\le & -\frac{\alpha}{3} + \frac{\alpha}{2} = \frac{\alpha}{6} < \alpha,
  \end{eqnarray*}
  and
  \begin{eqnarray*}
    (f-T)(\vx) &=& (\widehat{f}-T)(\vx) - \alpha H(\vx) + \frac{\alpha}{2} \\
               &>& -\frac{4\alpha}{3}  - \frac{\alpha}{6} + \frac{\alpha}{2} 
                = - \alpha,
  \end{eqnarray*}
  and hence $|f-T|<\alpha$ on $U^-$.
  Finally, suppose that $\vx\in K-(U^-\cup U^+)$.
  Since $-\frac{4\alpha}{3} < (\widehat{f}-T)(\vy) < \frac{4\alpha}{3}$
  for all $\vy\in K$, it follows that $-\frac{\alpha}{3} < (\widehat{f}-T)(\vx) < \frac{\alpha}{3}$.
  From this and $0\le H\le 1$ it follows that
  \begin{eqnarray*}
    (f-T)(\vx) &=& (\widehat{f}-T)(\vx) - \alpha H(\vx) + \frac{\alpha}{2} \\
               &<& \frac{\alpha}{3} + \frac{\alpha}{2} = \frac{5\alpha}{6} < \alpha,
  \end{eqnarray*}
  and
  \begin{eqnarray*}
    (f-T)(\vx) &=& (\widehat{f}-T)(\vx) - \alpha H(\vx) + \frac{\alpha}{2} \\
               &>& -\frac{\alpha}{3}  - \alpha + \frac{\alpha}{2} 
                = - \frac{5\alpha}{6} > -\alpha.
  \end{eqnarray*}
  But $f\in\NN_4$ and $\|f-T\|_u<\alpha$ is a contradiction, and so the theorem is proven.
\end{proof}

%%%%%%%%%%%%%%%%%%%%%%%%%%%%%%%%%%%%%%%%%%%%%%%%%%%%%%%%%%%%%%%%%%%%%%%%
\section{Remarks}
  There are a number of easily deduced corollaries
  that extend the result. 
  For example, if $\sigma$ is strictly increasing, we can obtain
  a similar result using $\NN_4^{\sigma}$ for continuous
  functions from $K$ to the image $\sigma(\R)$.
  Of course another corollary is that
  functions $C(K,\R^m)$ can be uniformly approximated 
  by functions in $(\NN_4)^m$, and these two extensions could
  also be combined. But these are all very standard and obvious extensions
  which apply quite generally.

  There are several places where this argument is `wasteful'; that is,
  places where we are not using the full power of a three hidden layer neural
  network. This leaves room for the possibility that a very similar
  technique might be used to deduce the same conclusion for neural networks
  with two hidden layers.

  We are grateful to Anupam Pal Choudhury for carefully reading the first
  draft of this note and pointing out several places where the argument
  was unclear.

%%%%%%%%%%%%%%%%%%%%%%%%%%%%%%%%%%%%%%%%%%%%%%%%%%%%%%%%%%%%%%%%%%%%
\bibliographystyle{plain}

\end{document}